\newtheorem{theorem}{Theorem}[]
\title{Robust Optimal Classification Trees Against Adversarial Examples}
\author {
    % Authors
    Dani\"el Vos,
    Sicco Verwer
}
\begin{document}

\maketitle

\begin{abstract}
Decision trees are a popular choice of explainable model, but just like neural networks, they suffer from adversarial examples. Existing algorithms for fitting decision trees robust against adversarial examples are greedy heuristics and lack approximation guarantees. In this paper we propose ROCT, a collection of methods to train decision trees that are optimally robust against user-specified attack models. We show that the min-max optimization problem that arises in adversarial learning can be solved using a single minimization formulation for decision trees with 0-1 loss. We propose such formulations in Mixed-Integer Linear Programming and Maximum Satisfiability, which widely available solvers can optimize. We also present a method that determines the upper bound on adversarial accuracy for any model using bipartite matching. Our experimental results demonstrate that the existing heuristics achieve close to optimal scores while ROCT achieves state-of-the-art scores.
\end{abstract}

\section{Introduction}
Since the discovery of adversarial examples in neural networks \cite{szegedy2013intriguing} much work has gone into training models that are robust to these attacks. Recently, efforts were made to train robust decision trees against adversarial examples.
Chen et al. \cite{chen2019robust} trained robust trees by scoring with the worst-case information gain or Gini impurity criteria and proposed a fast approximation. Calzavara et al. \cite{calzavara2020treant} created TREANT, a flexible approach in which the user describes a threat model using rules and a loss function under attack that it greedily optimizes. GROOT \cite{vos2020efficient} sped up the worst-case Gini impurity computation and defended against user-specified box-shaped attack models. All these algorithms are greedy and therefore only make locally optimal decisions.

In decision tree learning, there has been an increased interest in optimal learning algorithms~\cite{carrizosa2021mathematical}. Although the problem of learning decision trees is NP-complete~\cite{laurent1976constructing}, these methods can produce optimally accurate decision trees for many (typically small) datasets. Most methods translate the problem to well-known frameworks such as Mixed-Integer Linear Programming~\cite{bertsimas2017optimal,verwer2017learning}, Boolean Satisfiability~\cite{narodytska2018learning,avellaneda2020efficient}, and Constraint Programming~\cite{verhaeghe2020learning}.

In this work we combine these lines of research and propose Robust Optimal Classification Trees (ROCT), a method to train decision trees that are optimally robust against user-specified adversarial attack models. Like existing robust decision tree learning algorithms~\cite{calzavara2020treant,vos2020efficient}, ROCT allows users to specify
a box-shaped attack model that encodes an attacker's capability to modify feature values with the aim of maximizing loss. 
Existing robust decision tree learning methods use a greedy node splitting approach.
Other robust learning algorithms such as adversarial training~\cite{madry2017towards} solve the inner maximization (adversarial attacks) and the outer minimization problems (minimize expected loss) separately. In this work we demonstrate that this separation is not needed in the case of decision trees. We provide a formulation that solves the problem of fitting robust decision trees exactly in a single minimization step for a tree up to a given depth.

ROCT uses a novel translation of the problem of fitting robust decision trees into a Mixed-Integer Linear Programming (MILP) or Maximum Satisfiability (MaxSAT) formulations. We also propose a new upper-bound calculation for the adversarial accuracy of any machine learning model based on bipartite matching which can be used to choose appropriate attack models for experimentation.
Our results show that ROCT trees optimized with a warm-started MILP solver achieve state-of-the-art adversarial accuracy scores compared to existing methods on 8 datasets.
Moreover, given sufficient solver time, ROCT provably finds an optimally robust decision tree. In our experiments, ROCT was able to fit and prove optimality of depth $2$ decision trees on six datasets.
Where there are no known approximation bounds on the performance of existing heuristic methods for fitting robust decision trees, our results demonstrate that they are empirically close to optimal.

\section{Background and Related Work}

\subsection{Mixed-Integer Linear Programming}
Mixed-Integer Linear Programming (MILP) is a variation of Linear Programming in which some variables are integers or binary. The goal of these formulations is to optimize a linear function under linear constraints. While the problem is generally NP-hard there exist many fast MILP solvers. In this paper, we translate the robust decision tree learning problem into a MILP formulation and solve it using GUROBI\footnote{\url{https://www.gurobi.com/}} 9. Since MILP solvers usually become less efficient with more integer variables, we introduce two formulations that differ in the number of such variables.

\subsection{Maximum Satisfiability Solving}
Maximum Satisfiability (MaxSAT) is an optimization version of the classical boolean satisfiability problem (SAT).
In MaxSAT, problems are modeled as a set of hard clauses that have to be satisfied and a set of soft clauses of which the solver tries to satisfy as many as possible. One advantage of MaxSAT solvers is their availability, with many state-of-the-art solvers available as open source programs. In this work we use the PySat implementation \cite{imms-sat18} of the Linear Sat-Unsat (LSU) algorithm \cite{morgado2013iterative} improved with incremental cardinality constraints \cite{martins2014incremental}, and the RC2 algorithm \cite{ignatiev2019rc2}. These algorithms differ in the direction in which they optimize,
LSU starts with a poor solution and creates increasingly optimal solutions over time while RC2 starts by attempting to satisfy all soft clauses then relaxes this constraint until it finds a solution.
Both algorithms use the Glucose\footnote{\url{https://www.labri.fr/perso/lsimon/glucose/}} 4.1 SAT solver.

\subsection{Optimal Decision Trees}
Most popular decision tree learning algorithms such as CART \cite{breiman1984classification}, ID3 \cite{quinlan1986induction} and C4.5 \cite{quinlan1993c4} are greedy and can return 
arbitrarily bad trees \cite{kearns1996boosting}. In recent years, there has been extensive effort to train optimal trees.
One of the earliest works~\cite{bertsimas2007classification}
proposes a MILP formulation for finding a tree of a given maximum depth that uses clustering to reduce the dataset size. Independently,~\cite{nijssen2010optimal} maps this problem for the restricted case of Boolean decision nodes to itemset mining. Several years later the first MILP formulations were proposed for the full problem
~\cite{bertsimas2017optimal} and 
~\cite{verwer2017learning}. The latest methods improve these works using a binary encoding~\cite{verwer2019learning}, translation to CP~\cite{verhaeghe2020learning}, dynamic programming with search~\cite{demirovic2020murtree}, and caching branch-and-bound~\cite{aglin2020learning}.
In this work, we build on these works to create the first formulation for optimal learning of robust decision trees.

\subsection{Robust Decision Trees}
Previous works have already put effort into fitting decision trees that are more robust to adversarial perturbations than the trees created by regular decision tree algorithms. \cite{kantchelian2016evasion} defines a MILP formulation for finding adversarial examples in decision tree ensembles and used these samples to fit an ensemble of more robust decision trees. \cite{chen2019robust} adapts greedy decision tree learning algorithms by using the worst case score functions under attacker influence to fit more robust trees against $L_\infty$ norm bounded attackers. Later, TREANT \cite{calzavara2020treant} uses a more flexible greedy algorithm that could optimize arbitrary convex score functions under attacker influence and allowed users to describe attacker capabilities using axis-aligned rules. This flexibility comes at a cost in run-time, as it uses an iterative solver to optimize this score for each split it learns. GROOT \cite{vos2020efficient} improve the greedy procedure 
by efficiently computing the worst-case Gini impurity and allowing users to specify box-shaped attacker perturbation limits. In this paper, we compare against GROOT and TREANT as these greedy methods achieve state-of-the-art scores.

\section{ROCT: Robust Optimal Classification Trees}
When training robust classifiers we find ourselves in a competition with the adversary. Madry et al. \cite{madry2017towards} present the robust learning problem as the following min-max optimization problem:
\begin{equation} \label{eq:robust-learning-problem}
    \min_\theta \mathbb{E}_{(x, y) \sim D} \left( \max_{\delta \in S} L(\theta, x + \delta, y) \right)
\end{equation}

Here the learning algorithm attempts to find model parameters $\theta$ that minimize the expected risk (outer minimization) after an attacker moves samples from distribution $D$ within perturbation domain $S$ (inner maximization). Intuitively, the min-max nature of training robust models makes it a much more challenging optimization problem than regular learning. For example in adversarial training \cite{madry2017towards} one approximately optimizes this function by incorporating expensive adversarial attacks into the training procedure. In this work, we demonstrate that this intuition does not hold when learning decision trees.
We present Robust Optimal Classification Trees (ROCT), which turns Equation \ref{eq:robust-learning-problem} into a single minimization problem that can be solved using combinatorial optimization. ROCT consists of 6 methods that are based on the same intuitions but vary in the kind of solver (MILP or MaxSAT) and type of variables used to represent splitting thresholds, see Table \ref{tab:roct-overview}.

\begin{table}[tb]
\caption{Summary of introduced methods, they differ in solver type and whether thresholds are formulated with binary or continuous variables. `Warm' methods are initialized with the GROOT heuristic.}
\begin{tabular}{@{}c|ccc@{}}
\toprule
Method & \begin{tabular}[c]{@{}c@{}}Threshold\\ formulation\end{tabular} & Solver & \begin{tabular}[c]{@{}l@{}}Init. with\\ GROOT\end{tabular} \\ \midrule
LSU-MaxSAT & binary & \begin{tabular}[c]{@{}c@{}}LSU\\ (glucose 4.1)\end{tabular} &  \\ \midrule
RC2-MaxSAT & binary & \begin{tabular}[c]{@{}c@{}}RC2\\ (glucose 4.1)\end{tabular} &  \\ \midrule
Binary-MILP & binary & GUROBI 9 &  \\ \midrule
\begin{tabular}[c]{@{}c@{}}Binary-MILP-\\ warm\end{tabular} & binary & GUROBI 9 & \checkmark \\ \midrule
MILP & continuous & GUROBI 9 &  \\ \midrule
MILP-warm & continuous & GUROBI 9 & \checkmark \\ \bottomrule
\end{tabular}
\label{tab:roct-overview}
\end{table}

\begin{theorem}
Robust learning (Equation \ref{eq:robust-learning-problem}) with 0-1 loss
in the case of binary classification trees is equivalent to:
\[
\min_\theta \sum_{(x, y) \sim D} \left[ \bigvee_{t \in \mathcal{T}_L^S} c_t \neq y \right]
\]
where $\mathcal{T}_L^S$ is the set of leaf nodes that intersect with the possible perturbations $S$, and $c_t$ is the class label of leaf $t$.
\end{theorem}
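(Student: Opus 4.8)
The plan is to prove the claimed identity \emph{pointwise in the samples and in the model}: for every fixed parameter vector $\theta$ (i.e., every fixed tree of the allowed depth) and every single labelled point $(x,y)$, I would show that the inner maximization $\max_{\delta\in S} L(\theta,x+\delta,y)$ equals the Iverson bracket $\big[\bigvee_{t\in\mathcal{T}_L^S} c_t\neq y\big]$ appearing on the right. Once this per-sample equality is in hand, summing over the (finite) sample --- the expectation $\mathbb{E}_{(x,y)\sim D}$ is the empirical average, and the normalizing constant $1/|D|$ is irrelevant to the optimizers --- turns the objective of Equation~\ref{eq:robust-learning-problem} into $\sum_{(x,y)\sim D}\big[\bigvee_{t\in\mathcal{T}_L^S} c_t\neq y\big]$, and since the equality holds for \emph{every} $\theta$, applying $\min_\theta$ to both sides preserves it; the two optimization problems then have identical objectives, hence the same optimal value and the same set of optimal trees.

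For the per-sample step I would first fix $\theta$ and recall that a binary classification tree induces a partition of the feature space into regions $\{R_t\}_t$ indexed by its leaves, under a fixed tie-breaking convention on the threshold comparisons (e.g.\ ``$\le$ goes left'') so that the $R_t$ are pairwise disjoint and cover the space; the tree predicts $c_{t(z)}$ on an input $z$, where $t(z)$ is the unique leaf with $z\in R_t$. With the $0$--$1$ loss, $L(\theta,z,y)=[\,c_{t(z)}\neq y\,]\in\{0,1\}$, so $\max_{\delta\in S}L(\theta,x+\delta,y)=1$ iff there is a $\delta\in S$ with $c_{t(x+\delta)}\neq y$. Writing $x+S=\{x+\delta:\delta\in S\}$ for the attacker's reachable set, I would observe the two-way implication: there exists $\delta\in S$ with $x+\delta\in R_t$ \emph{iff} $R_t\cap(x+S)\neq\emptyset$, which is exactly the defining condition for $t\in\mathcal{T}_L^S$ (here $\mathcal{T}_L^S=\mathcal{T}_L^S(x)$ is the set of leaves whose box meets $x+S$). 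Because the $R_t$ partition the space, as $\delta$ ranges over $S$ the point $x+\delta$ visits precisely the leaves of $\mathcal{T}_L^S$ and no others; hence ``some reachable leaf has the wrong label'' is the same statement as $\bigvee_{t\in\mathcal{T}_L^S} c_t\neq y$, giving $\max_{\delta\in S}L(\theta,x+\delta,y)=\big[\bigvee_{t\in\mathcal{T}_L^S} c_t\neq y\big]$. The bracket is well defined since $\mathcal{T}_L^S\neq\emptyset$: $x+S$ is nonempty and covered by the leaf regions.

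The argument is short, so the only real work is bookkeeping around the definitions, and that is where I expect the friction to be: making the dependence $\mathcal{T}_L^S=\mathcal{T}_L^S(x)$ explicit, pinning down a tie-breaking rule so that $t(\cdot)$ is single-valued on leaf boundaries, and --- to connect with the MILP/MaxSAT encodings later in the paper --- noting that for box-shaped $S$ and axis-aligned leaf regions, membership in $\mathcal{T}_L^S$ reduces to checking a conjunction of one-dimensional interval intersections. None of this alters the logical core, which is simply that maximizing a $\{0,1\}$-valued loss over the leaves an adversary can reach is identical to taking the disjunction, over those leaves, of ``this leaf misclassifies $(x,y)$''; this is exactly why the min--max of Equation~\ref{eq:robust-learning-problem} collapses to a single minimization for decision trees.
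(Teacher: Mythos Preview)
Your proposal is correct and follows essentially the same route as the paper: reduce the inner maximization over perturbations to a maximization over reachable leaves, then use that the $0$--$1$ loss is $\{0,1\}$-valued so a max is a disjunction. Your version is somewhat more careful about bookkeeping (the partition $\{R_t\}$, tie-breaking on thresholds, the implicit dependence $\mathcal{T}_L^S=\mathcal{T}_L^S(x)$, and nonemptiness), but the logical core is identical to the paper's proof.
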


\begin{proof}
For 0-1 loss $L_{\text{0-1}}$, Equation \ref{eq:robust-learning-problem} is equivalent to:
\[
\min_\theta \sum_{(x, y) \sim D} \left( \max_{\delta \in S} L_{\text{0-1}}(\theta, x + \delta, y) \right)
\]
A decision tree $\mathcal{T}$ maps any data point $x$ to a leaf node $t = \mathcal{T}(x)$ and assigns $c_t$ as its prediction. Any perturbation in the inner maximization $\max_{\delta \in S}$ such that $\mathcal{T}(x) = \mathcal{T}(x + \delta)$ gives the same classification outcome for 0-1 loss. The maximization over all $\delta \in S$ can therefore be replaced by a maximization over all leaf nodes $t \in \mathcal{T}_L^S$ that can be reached by any permutation in $S$.
By definition, the 0-1 loss term is equivalent to the absolute difference $|c_t - y|$ of prediction $c_t$ and label $y$, which gives:
\[
\min_\theta \sum_{(x, y) \sim D} \left( \max_{t \in \mathcal{T}_L^S} |c_t - y| \right)
\]
The term $|c_t - y|$ takes value $1$ when $c_t \neq y$ and $0$ otherwise. When any of the reachable leafs $t \in \mathcal{T}_L^S$ returns $c_t \neq y$, the inner maximization becomes 1. This 
is equivalent to the disjunction over $c_t \neq y$ for all reachable leafs:
\[
\min_\theta \sum_{(x, y) \sim D} \left[ \bigvee_{t \in \mathcal{T}_L^S} c_t \neq y \right]
\]
\end{proof}
The resulting formulation can be solved in one shot using discrete optimization solvers such as MaxSAT and MILP.

\subsection{Attack Model} \label{sec:threat-model}
We assume the existence of a white-box adversary that can move all samples within a box-shaped region centered around each sample. This box-shaped region can be defined by two vectors $\Delta^l$ and $\Delta^r$ specifying for each feature by how much the feature value can be decreased and increased respectively. For the ease of our formulation we scale all feature values to be in the range $[0, 1]$ which means that the values in $\Delta^l$ and $\Delta^r$ encode distance as a fraction of the feature range. While our encoding is more flexible, we only test on attack models that encode an $L_\infty$ radius. This allows us to easily evaluate performance against a variety of attacker strengths by only changing the perturbation radius $\epsilon$.

\begin{table}[tb]
\caption{Summary of the notation used throughout the paper.}
\setlength{\tabcolsep}{4.5pt}
\centering
\begin{tabular}{lll}  
\toprule
Symbol & Type & Definition \\
\midrule
$a_{jm}$ & variable & node $m$ splits on feature $j$ \\
$b_{vm}$ & variable & node $m$'s threshold is left/right of $v$ \\
$b'_{m}$ & variable & node $m$'s continuous threshold value \\
$c_{t}$ & variable & leaf node $t$ predicts class $0$ or $1$ \\
$s_{im0}$ & variable & sample $i$ can move left of node $m$ \\
$s_{im1}$ & variable & sample $i$ can move right of node $m$ \\
$e_{i}$ & variable & sample $i$ can be misclassified \\
\midrule
$X_{ij}$ & constant & value of data row $i$ in feature $j$ \\
$y_{i}$ & constant & class label of data row $i$ \\
$\Delta^l_{j}$ & constant & left perturbation range for feature $j$ \\
$\Delta^r_{j}$ & constant & right perturbation range for feature $j$ \\
$n$ & constant & number of samples \\
$p$ & constant & number of features \\
\midrule
$A(t)$ & set & ancestors of node $t$ \\
$A_l(t)$ & set & ... with left branch on the path to $t$ \\
$A_r(t)$ & set & ... with right branch on the path to $t$ \\
$\mathcal{T}_B$ & set & all decision nodes \\
$\mathcal{T}_L$ & set & all leaf nodes \\
$V_j$ & set & unique values in feature $j$ \\
\bottomrule
\end{tabular}
\label{tab:notation}
\end{table}

\begin{figure*}[tb]
     \centering
     \begin{subfigure}[b]{.32\textwidth}
         \centering
         \includegraphics[width=\textwidth]{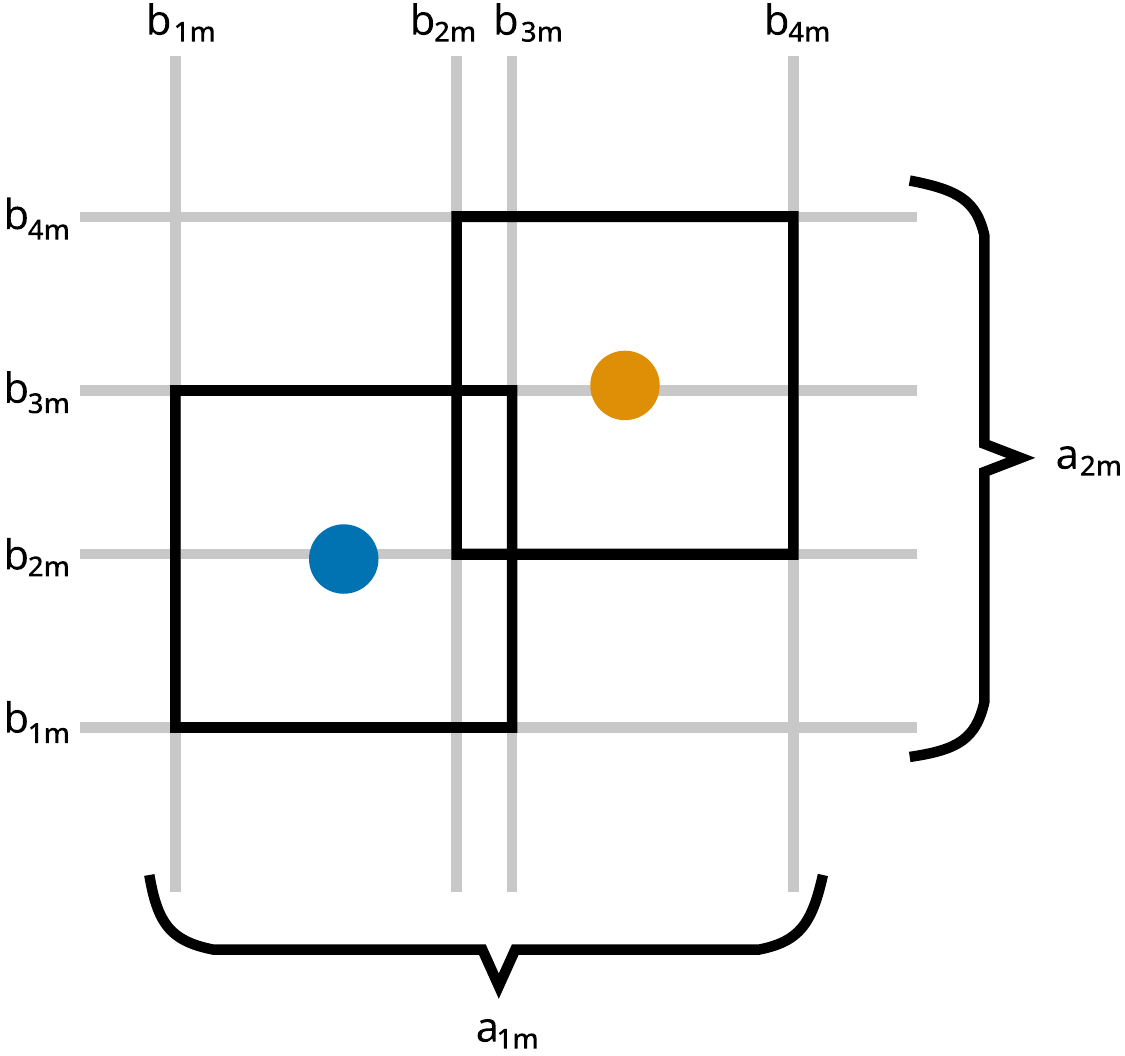}
         \caption{Decision node $m$ (binary)}
     \end{subfigure}
     \begin{subfigure}[b]{.32\textwidth}
         \centering
         \includegraphics[width=\textwidth]{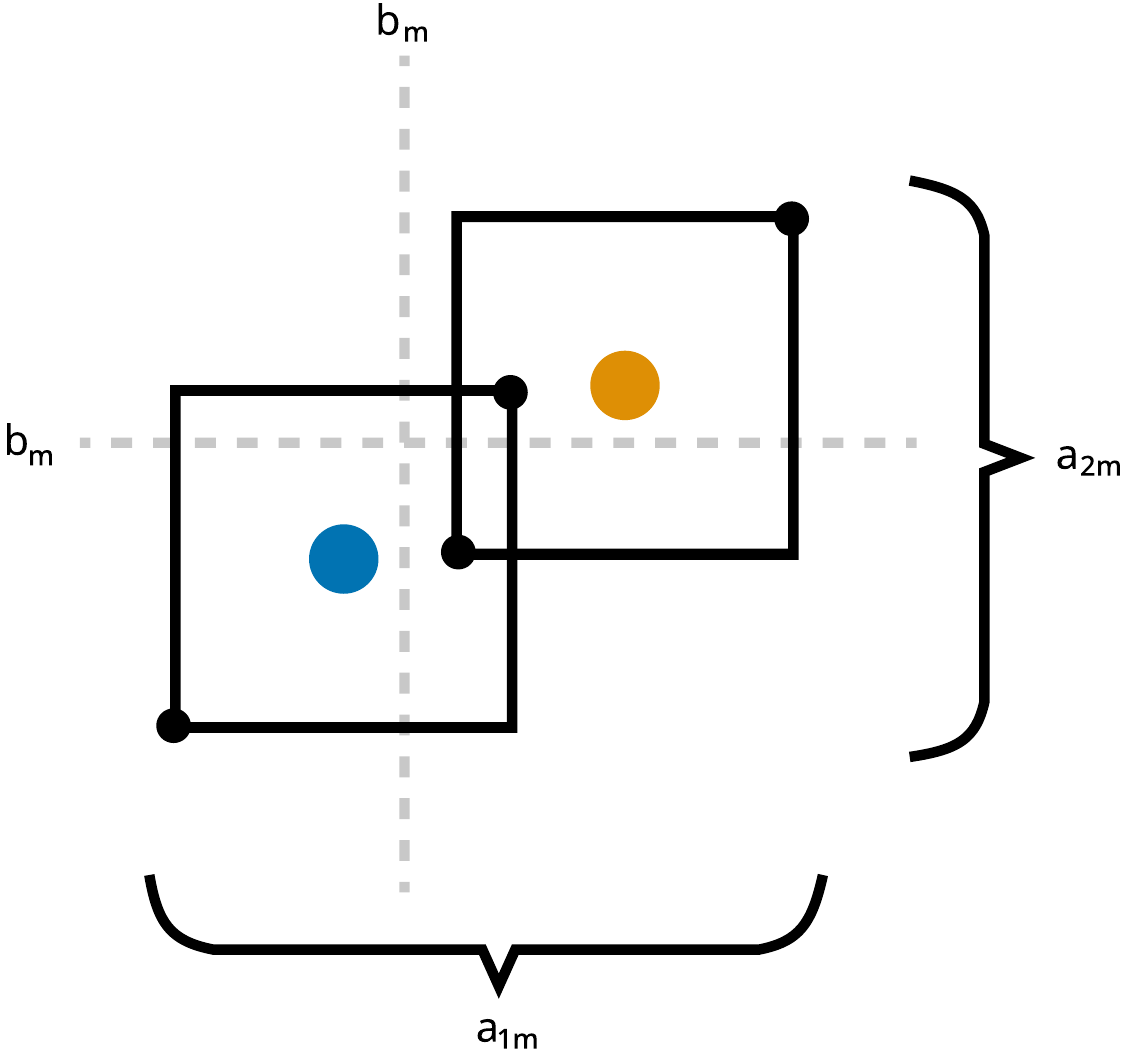}
         \caption{Decision node $m$ (continuous)}
     \end{subfigure}
     \hfill
     \begin{subfigure}[b]{.33\textwidth}
         \centering
         \includegraphics[width=\textwidth]{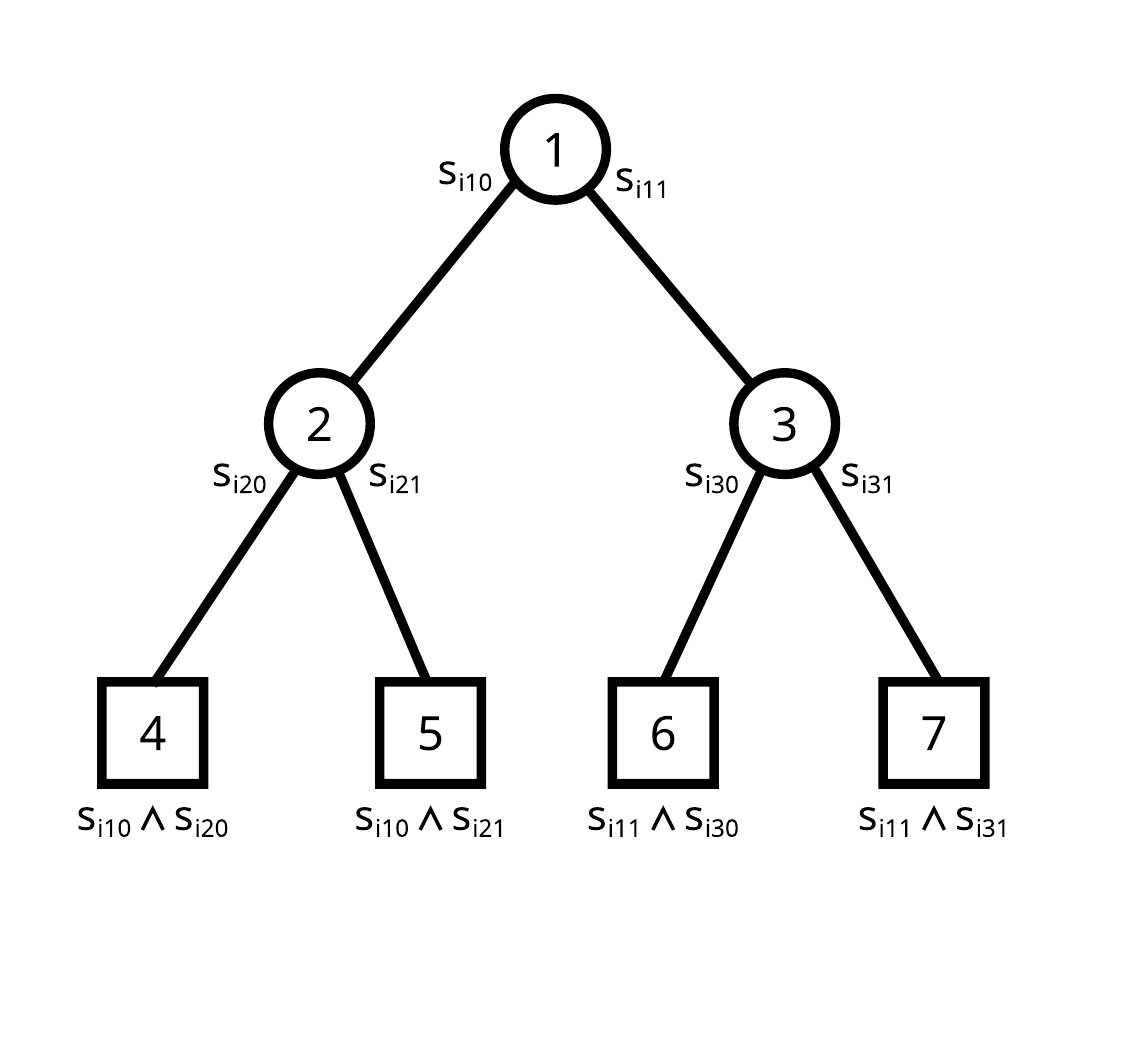}
         \caption{Path variables for sample $i$}
     \end{subfigure}
    \caption{Example of ROCT's formulation. For each decision node the $a$ variables select a splitting feature and $b$ select the threshold value. $b$ can be defined as multiple binary (a) or a single continuous (b) variable. Using the $s$ variables (c) ROCT traces all sample paths through the tree to the leaves and counts an error if any reachable leaf predicts the wrong class.}
    \label{fig:formulation-visualized}
\end{figure*}

\subsection{Intuition}
We borrow much of the notation from OCT \cite{bertsimas2017optimal}, summarized in Table \ref{tab:notation}. Figure \ref{fig:formulation-visualized} visualizes the variables in ROCT and Figure \ref{fig:simple-example} shows an example of the constraints for a single sample and a tree of depth 1.
In the regular learning setting where samples cannot be perturbed by an adversary, samples can only propagate to the left or right child of decision node. In the adversarial setting, samples can permute and are able to reach both the left and right sides, i.e. $s_{im0}$ and $s_{im1}$ can be true at the same time.

Given the attacker capabilities $\Delta^l$ and $\Delta^r$, we create the constraints to set the variables $s$. To determine whether sample $X_i$ can move left of the chosen split we can decrease its feature values as far as the attacker capabilities allow ($X_i - \Delta^l$) and see if it reaches the left side. Similarly to see if it reaches the right side we increase the feature values maximally ($X_i + \Delta^r$). We give two kinds of constraints for determining these $s$ variables that differ in whether decision thresholds are represented by binary or continuous variables.

\subsubsection{Continuous Decision Thresholds}
To select a threshold value an intuitive method is to create a continuous variable $b_m$ for every decision node. We can then use this variable to determine the values $s_{im0}$ and $s_{im1}$ by checking whether $X_i - \Delta^l$ and $X_i + \Delta^r$ can reach the left and right side of the threshold respectively. We create the following constraints:
\begin{equation*}
    \mathbf{X_i} \cdot \mathbf{a_{m}} \leq b'_m \implies s_{im0}
\end{equation*}
\begin{equation*}
    \mathbf{X_i} \cdot \mathbf{a_{m}} > b'_m \implies s_{im1}
\end{equation*}
Since these constraints use a dot product with continuous variables it is not possible to implement this in MaxSAT. Another challenge comes with the second constraint being a strict inequality which is not directly supported in MILP. Like \cite{bertsimas2017optimal}, we add a small value to the right hand side to turn it into a regular inequality.

\subsubsection{Binary Decision Thresholds}
We create a set of variables $b_{vm}$ for each unique decision threshold value $v$, with $v$ in ascending order. Instead of forcing one of them to \texttt{true}, we create an ordering in the variables such that if one threshold variable is \texttt{true}, the larger variables also become \texttt{true}:
\begin{equation*}
    b_{vm} \implies b_{(v+1)m}
\end{equation*}
Intuitively if $A(b_{vm})$ is \texttt{true} a sample with feature value $v$ will be sent to the right of the split and when $A(b_{vm})$ is \texttt{false} it will be sent to the left.
A useful property of this constraint is that we only have to encode the local influence of a threshold variable $b_{vm}$ on close-by data points, the rest is forced by the chain of constraints.
For each feature $j$ we determine what threshold values $v^l$ and $v^r$ correspond to $X_{ij} - \Delta^l_j$ and $X_{ij} + \Delta^r_j$ and check whether their $b_{vm}$ values indicate that the sample can reach the left / right side:
\begin{equation*}
    a_{jm} \land \neg b_{v^lm} \implies s_{im0}
\end{equation*}
\begin{equation*}
    a_{jm} \land b_{v^rm} \implies s_{im1}
\end{equation*}

\subsubsection{Selecting Features}
Consider a single decision node $m$, such a decision node needs to decide a feature to split on. We create a binary variable $a_{jm}$ for each feature $j$ and force that exactly one of these variables can be equal to $1$:
\begin{equation*}
    \sum_{j=1}^p a_{jm} = 1
\end{equation*}
This constraint can be relaxed to $\sum_{j=1}^p a_j \geq 1$ as selecting more than one feature can only make more $s$ variables \texttt{true} and thus can only increase the number of errors.

\subsubsection{Counting errors}
We create a variable $e_i$ for each sample $i$ which is true when any reachable leaf $t \in \mathcal{T}_L^S$ (see Theorem 1)  predicts the other class.
These leaves are found by following all paths a sample can take through the tree using the $s_{im0}$ and $s_{im1}$ variables.
This is visualized in Figure \ref{fig:formulation-visualized}c.
Sample $i$ can reach leaf $t$ when the values $s_{im...}$ are \texttt{true} for all nodes $m$ on the path to $t$, i.e. $\bigwedge_{m \in A_l(t)} s_{im0} \bigwedge_{m \in A_r(t)} s_{im1}$. Here $A_l(t)$ refers to the set of ancestors of leaf $t$ of which we follow the path through its left child and $A_r(t)$ for child nodes on the right. When sample $i$ can reach leaf $t$ and its label does not match $t$'s prediction ($y_i \neq c_t$), force $e_i$ to \texttt{true}:
\begin{equation*}
    \bigwedge_{m \in A_l(t)} s_{im0} \bigwedge_{m \in A_r(t)} s_{im1} \land (c_{t} \neq y_i) \implies e_i
\end{equation*}
With one constraint per decision leaf and sample combination this determines the $e$ values.
To then turn all possible paths into predictions we need to assign a prediction label to each decision leaf. Each leaf $t$ gets a variable $c_t$ where \texttt{false} means class 0 and \texttt{true} means class 1.

\subsubsection{Objective Function}
Our goal is to minimize the equation from Theorem 1.
This is equivalent to minimizing the sum of errors $e_i$ ($i = 1 ... n$). We convert this MILP objective to MaxSAT by adding a soft constraint $\neg e_i$ for each sample and maximizing the number of correctly predicted samples:
\begin{equation*}
    \text{maximize} \quad \sum_{i=1}^n \neg e_i \quad \text{or} \quad \text{minimize} \quad \sum_{i=1}^n e_i
\end{equation*}

\subsection{Complete Formulation} \label{sec:formulation}
Below we give the full formulation for ROCT, in Table \ref{tab:notation} we summarize the notation used. The 
equations can easily be formulated as MILP or MaxSAT instances, these conversions are given in the appendix.\\

\small
\begin{align*}
& \text{min.} \sum_{i = 1}^n e_i && \\
& \text{subject to:} && \\
& \sum_{j = 1}^p a_{jm} = 1, && \forall m \in \mathcal{T}_B \\
& b_{vm} \Rightarrow b_{(v+1)m}, && \forall m \in \mathcal{T}_B, v{=}1 .. |V_j|-1 \\
& \;\;\;\bigwedge_{\mathclap{m \in A_l(t)}} s_{im0} \bigwedge_{\mathclap{m \in A_r(t)}} s_{im1} \land [c_{t} {\neq} y_i] \Rightarrow e_i, && \forall t \in \mathcal{T}_L, i{=}1..n \\
& \textbf{continuous threshold variables:} && \\
& \mathbf{X_i} \cdot \mathbf{a_{m}} \leq b'_m \Rightarrow s_{im0} && \forall m \in \mathcal{T}_B, i{=}1..n \\
& \mathbf{X_i} \cdot \mathbf{a_{m}} > b'_m \Rightarrow s_{im1} && \forall m \in \mathcal{T}_B, i{=}1..n \\
& \textbf{binary threshold variables:} && \\
& a_{jm} \land \neg b_{v^lm} \Rightarrow s_{im0}, && \forall m \in \mathcal{T}_B, i{=}1 .. n, j{=}1 .. p \\
& a_{jm} \land b_{v^rm} \Rightarrow s_{im1}, && \forall m \in \mathcal{T}_B, i{=}1..n, j{=}1..p
\end{align*}
\normalsize

\begin{figure}[tb]
     \centering
     \includegraphics[width=.98\linewidth]{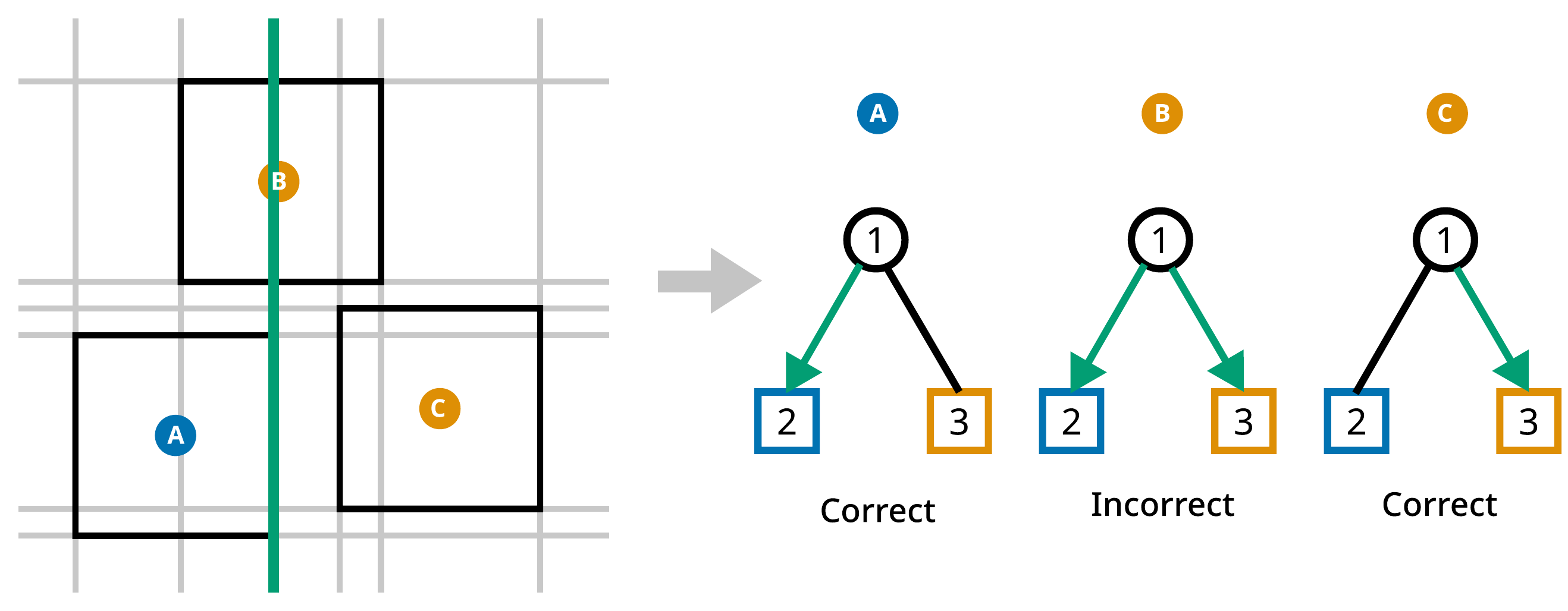}
    \caption{Example of a decision tree of depth 1 with the binary threshold formulation. Sample A and C get correctly classified since all their reachable leaves predict the correct label. Sample B reaches both leaves, since the left leaf predicts the wrong label, B gets misclassified.}
    \label{fig:simple-example}
\end{figure}

\subsubsection{Example}
For clarity we give a small example of a decision tree of depth 1 that we fit on 3 samples. In Figure \ref{fig:simple-example} the solver selects a threshold that forces two samples into a leaf with their correct label but leaves one sample close enough to the split that it can move both ways. Since we only show a single decision node, we drop $m$ from all subscripts of variables. The assignments $A$ to the variables are
$A(a_{1}) = 1$ (selecting feature $1$), $A(b_1), \ldots, A(b_6) = 0,0,0,1,1,1$ (selecting the 3rd out of 6 possible thresholds), the $s$ variables become:
\[
A(b_{1}) = 0 \Rightarrow A(s_{1,0}) = 1
~~~
A(b_{3}) = 0 \Rightarrow A(s_{1,1}) = 0
\]
\[
A(b_{2}) = 0 \Rightarrow A(s_{2,0}) = 1
~~~
A(b_{5}) = 1 \Rightarrow A(s_{2,1}) = 1
\]
\[
A(b_{4}) = 1 \Rightarrow A(s_{3,0}) = 0
~~~
A(b_{6}) = 1 \Rightarrow A(s_{3,1}) = 1
\]
Since $A(c_1) = y_1$ and $A(c_2) = y_3$,  $e_1$ and $e_3$ are unconstrained and minimized to 0 by the solver, and since $A(s_{2,0}) = A(s_{2,1}) = 1$ the constraints force $A(e_2) = 1$. The second sample is hence misclassified (it reaches at least one leaf with a prediction value different than its label).
Note that, although the thresholds in Figure \ref{fig:simple-example} are always exactly on the perturbation ranges of a sample, we post-process these to maximize the margin.

\begin{figure}[tb]
    \centering
    \includegraphics[width=.99\linewidth]{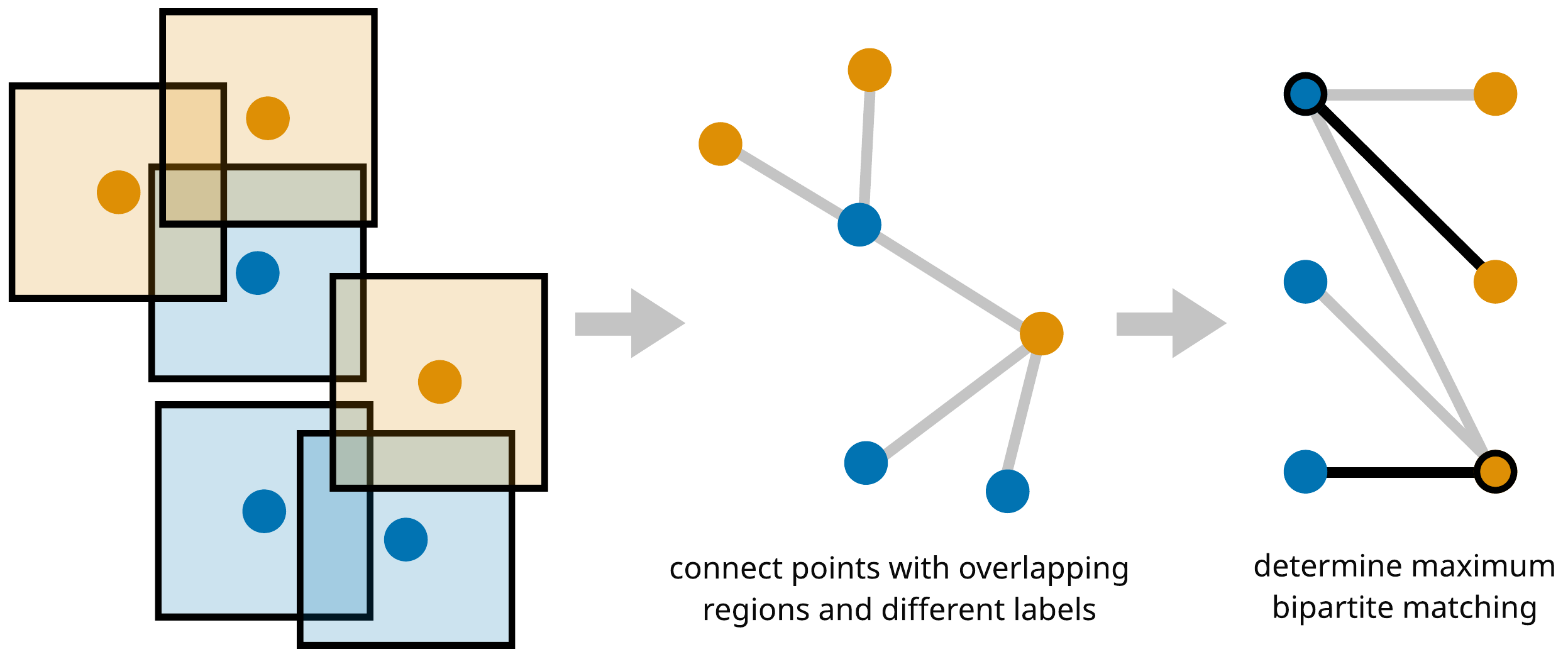}
    \caption{Computing a bound on adversarial accuracy by maximum matching. The maximum matching and minimum vertex cover are shown in black. Since the matching has a cardinality of 2 it is impossible to misclassify fewer than 2 samples when accounting for perturbations.}
    \label{fig:bipartite-matching}
\end{figure}

\section{Upper Bound on Adversarial Accuracy}
In a regular learning setting with stationary samples one strives for a predictive accuracy of 100\%. As long as there are no data points with different labels but same coordinates achieving this score is theoretically possible. However, we realize that in the adversarial setting a perfect classifier cannot always score 100\% accuracy as samples can be perturbed. We present a method to compute the upper bound on adversarial accuracy using a bipartite matching that can be computed regardless of what model is used.
We use this bound to choose better $\epsilon$ values for our experiments.
It also lets us compare the scores of optimal decision trees to a score that is theoretically achievable by perfect classifiers. Such a matching approach was also used in \cite{wang2018analyzing} to train robust kNN classifiers.

\begin{theorem}
The maximum cardinality bipartite matching between samples with overlapping perturbation range and different labels $\{ (i,j) : S_i \cap S_j \neq \emptyset \land y_i \neq y_j \}$ gives an upper bound to the adversarial accuracy achievable by any model for binary decision problems.
\end{theorem}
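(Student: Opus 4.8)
The plan is to fix an arbitrary (deterministic) classifier $f$ and show that it must be ``robustly wrong'' on at least $|M|$ of the $n$ samples, where $M$ is a maximum matching in the stated conflict graph; dividing by $n$ then gives the bound. First I would make the bipartite structure explicit. Since the task is binary, put the samples with $y_i = 0$ on one side, those with $y_i = 1$ on the other, and draw an edge between $i$ and $j$ exactly when $S_i \cap S_j \neq \emptyset$. Every such edge joins opposite sides by the $y_i \neq y_j$ requirement, so the graph is genuinely bipartite and ``bipartite matching'' is the right notion (and König duality, matching the minimum-vertex-cover picture in Figure \ref{fig:bipartite-matching}, is available).

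The core is a per-edge argument. Call a sample $i$ \emph{robustly correct} for $f$ if $f(x') = y_i$ for every $x' \in S_i$; the adversarial accuracy of $f$ on the dataset is exactly the fraction of robustly correct samples, because a worst-case adversary perturbs each sample independently inside its own region $S_i$. Now take any edge $(i,j) \in M$ and pick a witness point $z \in S_i \cap S_j$. Since $f$ is a function, $f(z)$ is a single label in $\{0,1\}$, and $y_i \neq y_j$, so $f(z)$ disagrees with at least one of $y_i$ and $y_j$; say $f(z) \neq y_i$. Then the perturbation carrying $x_i$ to $z$ certifies that $i$ is not robustly correct. Hence every matching edge contributes an endpoint that is not robustly correct.

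Because the edges of a matching are pairwise vertex-disjoint, these $|M|$ bad endpoints are $|M|$ distinct samples, so $f$ is robustly correct on at most $n - |M|$ samples, i.e. its adversarial accuracy is at most $(n - |M|)/n$. As $f$ was arbitrary, this upper-bounds the adversarial accuracy of every model. The same argument extends to randomized classifiers by replacing ``not robustly correct'' with ``misclassified with probability at least $1/2$'' at the witness point and arguing in expectation.

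I do not expect a serious obstacle: this is essentially the standard ``overlapping regions force a conflict'' observation. The only points needing care are (i) confirming the conflict graph is bipartite so that a maximum matching, rather than some larger fractional or hypergraph object, is the correct bound, and (ii) being precise that the adversary is allowed to send two different samples to the \emph{same} point $z$, which is legitimate since perturbations are chosen per sample. As a remark, the bound is in fact tight: by König's theorem $|M|$ equals the minimum vertex cover size, and the complementary independent set consists of samples that pairwise either agree in label or have disjoint perturbation regions, so one can construct a classifier that is robustly correct on all of them and attains adversarial accuracy exactly $(n-|M|)/n$.
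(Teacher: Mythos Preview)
Your proof is correct and, for the upper-bound direction, actually more direct than the paper's. The paper argues via the minimum vertex cover: it observes that the set of samples that are \emph{not} robustly correct must form a vertex cover of the conflict graph (since any uncovered edge would yield two robustly-correct samples whose regions overlap, a contradiction), hence has size at least the minimum vertex cover, and then invokes K\"onig's theorem to identify this with the maximum matching size. You instead use the matching directly: each matching edge forces at least one of its endpoints to be misclassified at the shared witness point, and vertex-disjointness of matching edges makes these $|M|$ bad samples distinct. Your route avoids K\"onig entirely for the bound itself and only brings it in for the optional tightness remark, which matches the paper's ``perfect classifier on the remaining samples'' observation. Both arguments rest on the same per-edge conflict; yours is the more elementary packaging, while the paper's vertex-cover phrasing makes the connection to Figure~\ref{fig:bipartite-matching} (the pictured cover) more explicit.
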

\begin{proof} The reduction to maximum bipartite matching is based on the realization that when the perturbation ranges of two samples with different labels overlap it is not possible to predict both of these samples correctly. A visual explanation is given in Figure \ref{fig:bipartite-matching}. Formally, given a classifier $C$ that maps samples to a class $0$ or $1$, a sample $i$ can only be correctly predicted against an adversary if its entire perturbation range $S_i$ is correctly predicted: 
\begin{equation} \label{eq:adv-sample-correct}
    \forall x \in S_i: C(x) = y_i
\end{equation}

Now given a sample $j$ of a different class (e.g. $y_i = 0$ and $y_1 = 1$) that has an overlapping perturbation range such that $S_i \cap S_j \neq \emptyset$, it is clear that Equation \ref{eq:adv-sample-correct} cannot simultaneously hold for both samples. We create a bipartite graph $G = (V_0, V_1, E)$ with $V_0 = \{i : y_i = 0 \}$ and $V_1 = \{i : y_i = 1\}$, i.e., vertices representing samples of class $0$ on one side and class $1$ on the other. We then connect two vertices with an edge if their perturbation ranges overlap and their labels are different: $E = \{ (i,j) : S_i \cap S_j \neq \emptyset \land y_i \neq y_j \}$.

To obtain the upper bound, we consider the minimum vertex cover $V'$ from $G$.
By removing all vertices / samples in $V'$, none of the remaining samples can be transformed to have identical feature values with a sample from the opposite class. A perfect classifier $C'(x)$ would therefore assign these rows their correct class values and an attacker will not be able to influence the score of this classifier. 
It is not possible to misclassify fewer samples than the cardinality of the minimum vertex cover $V'$ since removing any vertex from it will add at least one edge $e \in \{ (i,j) : S_i \cap S_j \neq \emptyset \land y_i \neq y_j \}$ which will cause an additional misclassification.
By K\"onig's theorem such a minimum cover in a bipartite graph is equivalent to a maximum matching. Therefore we can use a maximum matching solver to compute an upper bound on the adversarial accuracy.
\end{proof}

\begin{figure}[tb]
    \centering
    \includegraphics[width=\linewidth]{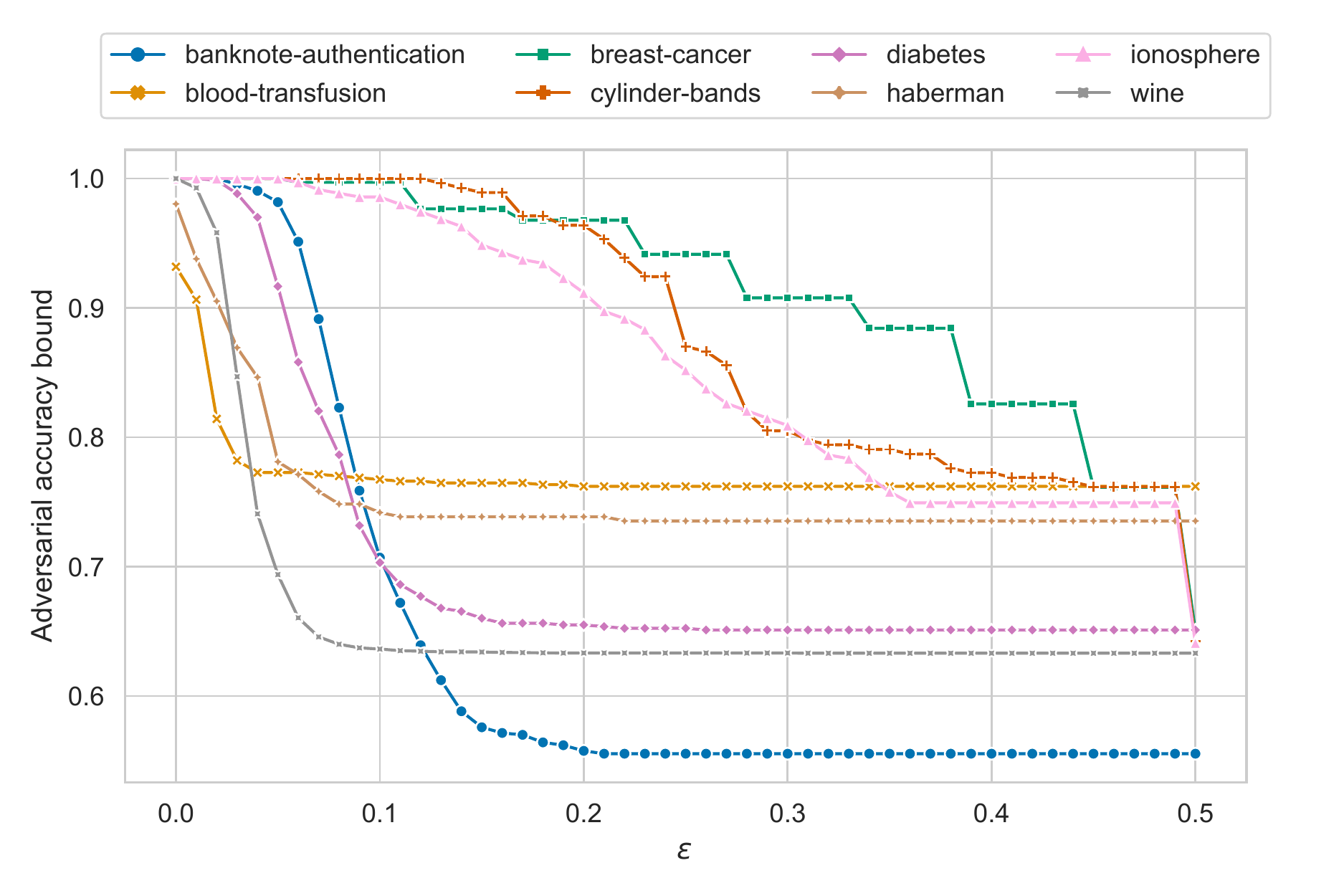}
    \caption{Varying the $L_\infty$ perturbation radius $\epsilon$ and computing the adversarial accuracy bound. Datasets are affected differently, e.g. $\epsilon{=}0.1$ has no effect on cylinder-bands while the bound for blood-transfusion shows that it is not possible to score better than constantly predicting its majority class.}
    \label{fig:epsilons}
\end{figure}

\subsection{Improving Experiment Design}
In previous works \cite{calzavara2020treant,vos2020efficient} attacker capabilities were arbitrarily chosen but this limits the value of algorithm comparisons, shown in Figure \ref{fig:epsilons}.
In this figure we vary the $L_\infty$ radius $\epsilon$ by which an adversary can perturb samples.
Particularly, if this value is chosen too large, the best possible model is a trivial one that constantly predict the majority class. If $\epsilon$ is chosen too small, the adversary has no effect on the learning problem.

To improve the design of our experiments we propose to choose values for $\epsilon$ along these curves that cause the adversarial accuracy bound to be non-trivial. In our experiments we choose three $\epsilon$ values for each dataset such that their values corresponds to an adversarial accuracy bound that is at $25\%$-$50\%$-$75\%$ of the range.

\section{Results}
To demonstrate the effectiveness of ROCT we compare it to the state-of-the-art robust tree learning algorithms TREANT and GROOT. First we run the algorithms on an artificial XOR dataset to show that the heuristics can theoretically learn arbitrarily bad trees, see Figure~\ref{fig:bad-example}. Then to compare the practical performance we run the algorithms on eight popular datasets \cite{chen2019robust,vos2020efficient} and varying perturbation radii ($\epsilon$).
All of our experiments ran on 15 Intel Xeon CPU cores and 72 GB of RAM total, where each algorithm ran on a single core.
We give an overview of the datasets and epsilon radii in the Appendix. These datasets are used in many of the existing works to compare robust tree learning algorithms and are available on OpenML\footnote{\url{http://www.openml.org}}.

\begin{figure}[tb]
     \centering
     \begin{subfigure}[b]{.15\textwidth}
         \centering
         \includegraphics[width=\textwidth]{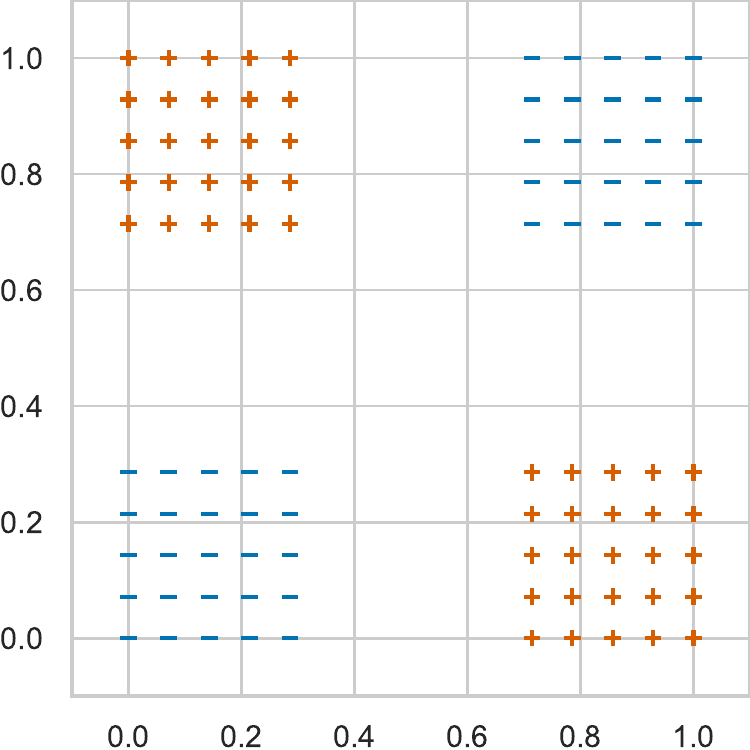}
         \caption{GROOT}
         \label{fig:bad-example-groot}
     \end{subfigure}
     \hfill
     \begin{subfigure}[b]{.15\textwidth}
         \centering
         \includegraphics[width=\textwidth]{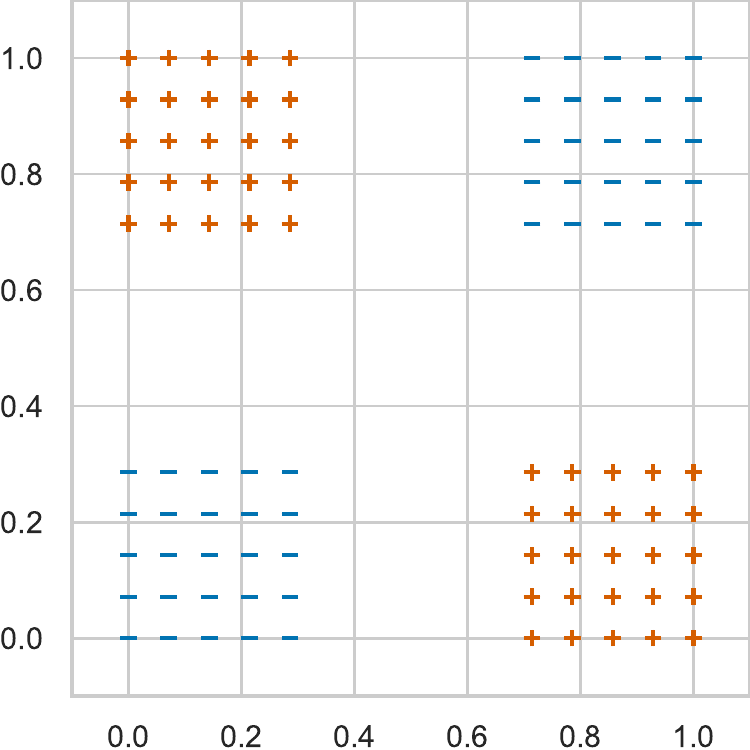}
         \caption{TREANT}
         \label{fig:bad-example-treant}
     \end{subfigure}
     \hfill
     \begin{subfigure}[b]{.15\textwidth}
         \centering
         \includegraphics[width=\textwidth]{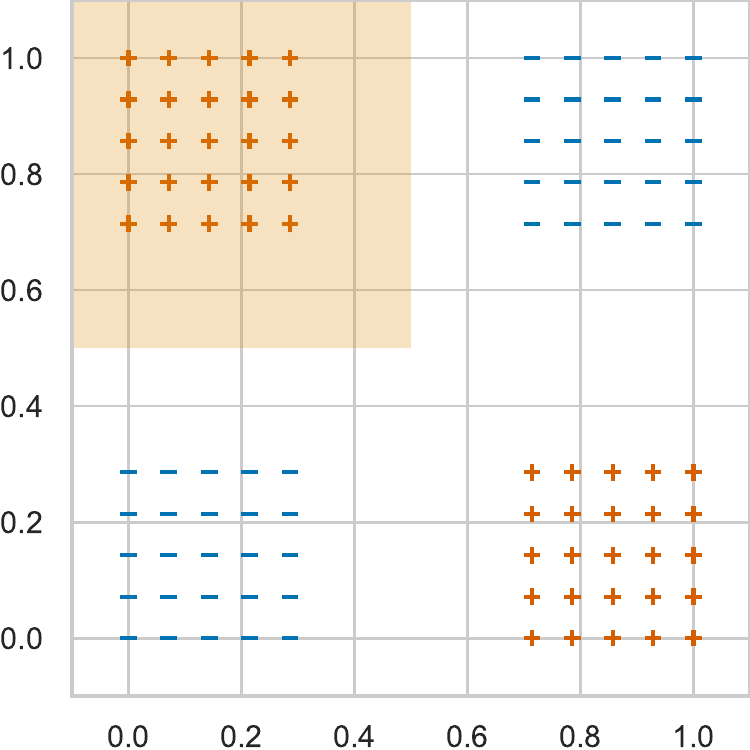}
         \caption{ROCT (MILP)}
         \label{fig:bad-example-roct}
     \end{subfigure}
    \caption{Model prediction regions on a worst case example for greedy tree learning algorithms (XOR dataset with $\epsilon{=}0.1$). Heuristic methods GROOT and TREANT cannot find any good splits, ROCT finds the optimal solution.}
    \label{fig:bad-example}
\end{figure}

\subsection{Predictive Performance on Real Data}
To demonstrate the practical performance of ROCT we compared the scores of ROCT, GROOT and TREANT on eight datasets. For each dataset we used an 80\%-20\% train-test split. To limit overfitting it is typical to constrain the maximum depth of the decision tree. To this end we select the best value for the maximum depth hyperparameter using 3-fold stratified cross validation on the training set. In each run, every algorithm gets 30 minutes to fit. For MILP, binary-MILP and LSU-MaxSAT this means that we stop the solver and retrieve its best solution at that time. The methods GROOT, TREANT and RC2-MaxSAT cannot return a solution when interrupted. Therefore when these algorithms exceed the timeout we use a dummy classifier that predicts a constant value. The adversarial accuracy scores were determined by testing for each sample whether a sample with a different label intersects its perturbation range.

Table \ref{tab:results} shows the aggregated results over these 8 datasets, all individual results are displayed in the appendix. The overall best scores were achieved with the MILP-warm method which is the MILP formulation with continous variables for thresholds and is warm started with the tree produced by GROOT.
The LSU-MaxSAT method also performed well and runs without reliance on trees trained with GROOT. TREANT's scores were lower than expected which can be attributed to the number of time outs.

\subsection{Runtime}
An advantage of using optimization solvers for training robust decision trees is that most solvers can be early stopped to output a valid tree. In figure \ref{fig:runtime-behavior} we plotted the mean training scores over all datasets for trees of depth 3 of the solvers that can be stopped. We see that all algorithms converge to nearly the same value given enough time. Moreover we find that LSU-MaxSAT quickly achieves good scores where it takes MILP-warm and Binary-MILP-warm approximately 10 and 100 seconds to catch up. The MILP-based methods that were not warm started with GROOT took approximately 1000 seconds to catch up with LSU-MaxSAT.

\begin{figure}[tb]
     \centering
     \includegraphics[width=\linewidth]{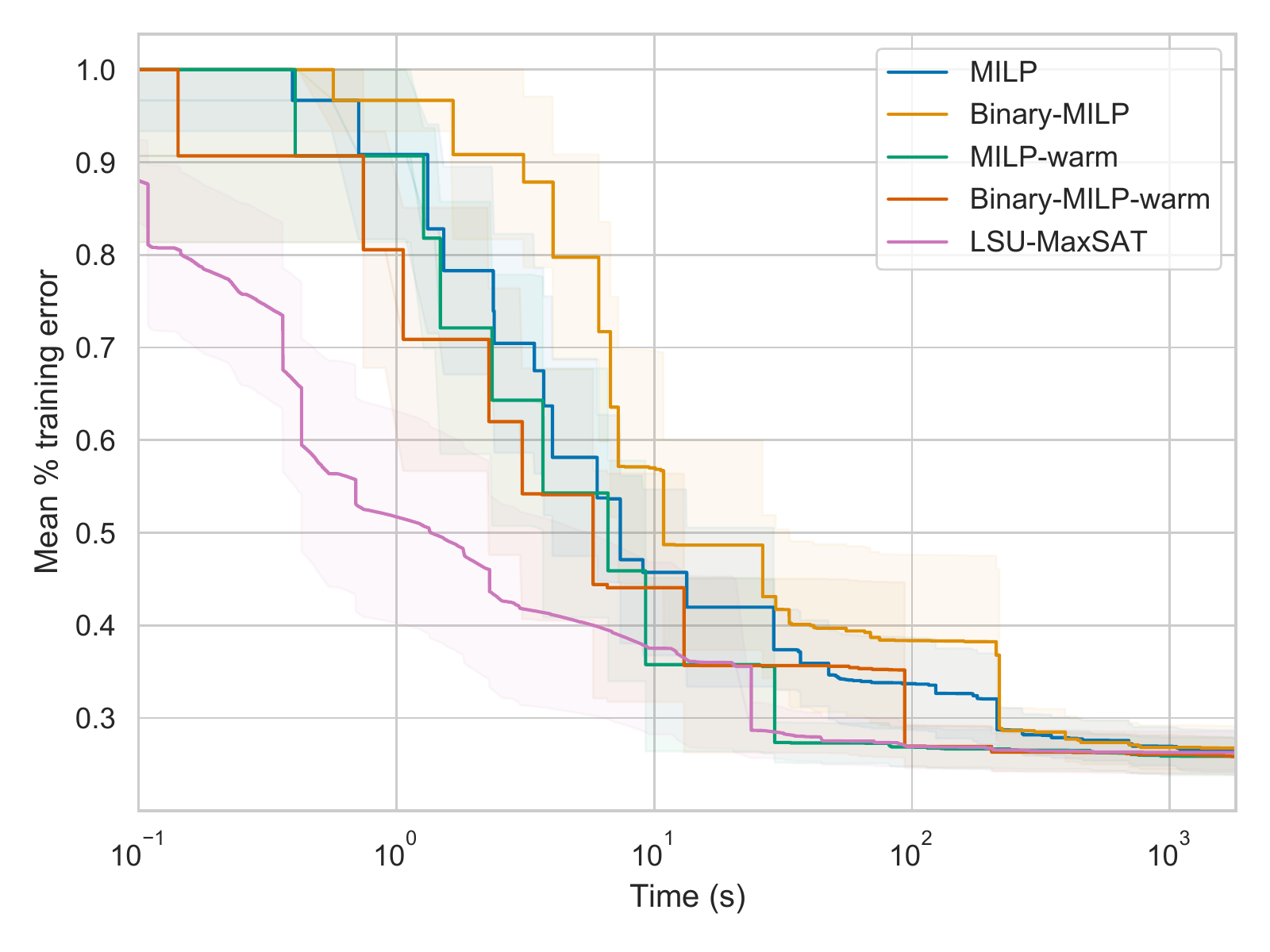}
    \caption{Mean percentage of misclassified training samples of all 8 datasets over time for trees of depth 3. The ranges represent one standard error. LSU-MaxSAT is faster at first but after 30 minutes the other methods catch up.}
    \label{fig:runtime-behavior}
\end{figure}

\subsection{Optimality}
Existing robust decision tree learning algorithms such as TREANT and GROOT have no performance guarantees. Using the LSU-MaxSAT solver we can find trees and prove their optimality on the training set which allows us to compare the scores of the heuristics with these optimal scores. In Figure \ref{fig:optimality} we plot the approximation ratios of GROOT trees after 2 hours of training. Although LSU-MaxSAT was not able to prove optimality for many datasets after a depth of 2 we can still see that GROOT scores close to optimal. All but one tree scores within a ratio of 0.92 with only one case having a ratio of approximately 0.87. We also plot the ratio between our upper bound and optimal trees. Interestingly, optimal trees of depths 1 and 2 already score close to the upper bounds in some cases.

\begin{figure}[tb]
     \centering
     \includegraphics[width=0.95\linewidth]{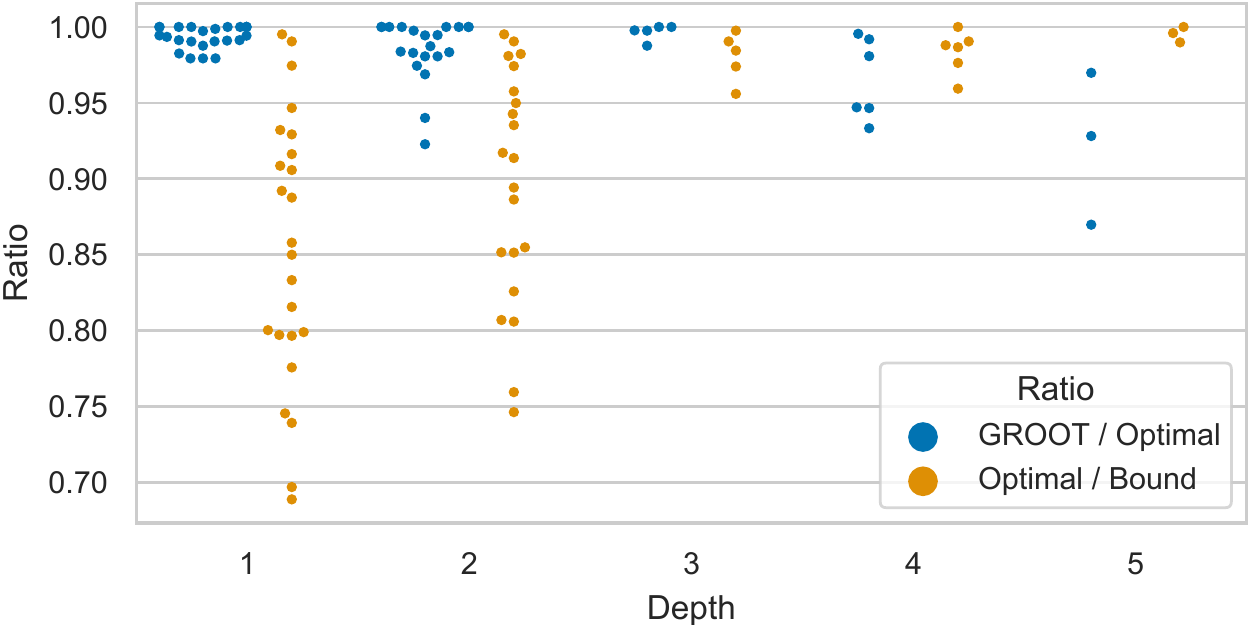}
    \caption{Ratios of training adversarial accuracy scores between GROOT vs LSU-MaxSAT's optimal trees and optimal trees vs our bound (Theorem 2). In most cases GROOT performs within 5\% of optimal. In some cases trees of depth 1 or 2 already score as well as the upper bound.}
    \label{fig:optimality}
\end{figure}

\begin{table}[tb]
    \centering
    \caption{Aggregate results over 8 datasets, means are shown with their standard error. All trees trained for 30 minutes.}
    \begin{tabular}{l|ccc}
    \toprule
    {} & Mean adv. & Mean rank & Wins \\
    Algorithm & accuracy & & \\
    \midrule
    TREANT & .692 \tiny $\pm$ .013 & 5.167 \tiny $\pm$ .604 & 7 \\
    Binary-MILP & .714 \tiny $\pm$ .013 & 3.958 \tiny $\pm$ .576 & 10 \\
    MILP & .720 \tiny $\pm$ .015 & 2.917 \tiny $\pm$ .454 & 12 \\
    RC2-MaxSAT & .724 \tiny $\pm$ .014 & 2.667 \tiny $\pm$ .393 & 10 \\
    GROOT & .726 \tiny $\pm$ .015 & 2.375 \tiny $\pm$ .450 & 16 \\
    Binary-MILP-warm & .726 \tiny $\pm$ .015 & 2.083 \tiny $\pm$ .399 & 16 \\
    LSU-MaxSAT & .729 \tiny $\pm$ .014 & 2.125 \tiny $\pm$ .303 & 13 \\
    MILP-warm & \textbf{.735} \tiny $\pm$ .015 & \textbf{1.583} \tiny $\pm$ .225 & \textbf{17} \\
    \bottomrule
    \end{tabular}
    \label{tab:results}
\end{table}

\section{Conclusions}
In this work we propose ROCT, a new solver based method for fitting robust decision trees against adversarial examples. Where existing methods for fitting robust decision trees can perform arbitrarily poorly in theory, ROCT fits the optimal tree given enough time. Important for the computational efficiency of ROCT is the insight and proof that the min-max adversarial training procedure can be computed in one shot for decision trees (Theorem 1). We compared ROCT to existing methods on 8 datasets and found that given 30 minutes of runtime ROCT improved upon the state-of-the-art. Moreover, although greedy methods have been compared to each other in earlier works, we demonstrate for the first time that the state-of-the-art actually performs close to optimal. We also presented a new upper bound for adversarial accuracy that can be computed efficiently using maximum bipartite matching (Theorem 2).

Although ROCT was frequently able to find an optimal solution and shows competitive testing performance, the choice of tree depth strongly influences runtime. Optimality could only be proven for most datasets up to a depth of 2 and for some until depth 4. Additionally, the size of ROCT's formulation grows linearly in terms of the number of unique feature values of the training dataset.
For small datasets of up to a few 1000 samples and tens of features ROCT is likely to improve performance over state-of-the-art greedy methods.
Overall, ROCT can increase the performance of state-of-the-art heuristic methods and, due to its optimal nature and new upper bound, provide insight into the difficulty of robust learning.

In the future, we will investigate realistic use cases of adversarial learning in security such as fraud / intrusion / malware detection. We expect our upper-bound method to be a useful tool in determining the sensibility of adversarial learning problems and for robust feature selection.

\bibliography{bibliography}

\end{document}